\theoremstyle{definition}
\newtheorem{definition}{Definition}[section]
\newcommand{\myexample}[2]{
    \begin{tcolorbox}[colback=black!5!white,colframe=black,title={#1}]
        #2
    \end{tcolorbox}
}
\newtheorem{theorem}{Theorem}
\newtheorem{assumption}{Assumption}
\newcommand{\eg}{\textit{e.g\@.}}
\newcommand{\ie}{\textit{i.e\@.}}
\newif\ifdraft
    \newcommand{\my}[1]{\textcolor{teal}{M: #1}}
    \newcommand{\franzi}[1]{\textcolor{violet}{franzi: #1}}
    \newcommand{\ilia}[1]{\textcolor{green}{ilia: #1}}
    \newcommand{\sierra}[1]{\textcolor{blue}{sierra: #1}}
    \newcommand{\jonas}[1]{\textcolor{olive}{jonas:#1}}
    \newcommand{\patty}[1]{\textcolor{cyan}{patty:#1}}
    \newcommand{\varun}[1]{\textcolor{red}{VC:#1}}
    \newcommand{\nicolas}[1]{\textcolor{olive}{NP:#1}}
    \newcommand{\stephan}[1]{\textcolor{orange}{stephan: #1}}
    \newcommand{\emmy}[1]{\textcolor{violet}{Emmy: #1}}%
    \newcommand{\anvith}[1]{\textcolor{brown}{2D:#1}}
    \newcommand{\ahmad}[1]{\textcolor{lime}{ahmad:#1}}
    \newcommand{\my}[1]{}
    \newcommand{\franzi}[1]{}
    \newcommand{\ilia}[1]{}
    \newcommand{\anvith}[1]{}
    \newcommand{\sierra}[1]{}
    \newcommand{\jonas}[1]{}
    \newcommand{\patty}[1]{}
    \newcommand{\varun}[1]{}
    \newcommand{\nicolas}[1]{}
    \newcommand{\stephan}[1]{}
    \newcommand{\daivd}[1]{}
    \newcommand{\ahmad}[1]{}
    \newcommand{\emmy}[1]{}
\title{LLM Censorship: A Machine Learning Challenge \\ or a Computer Security Problem?}
\author{
  David Glukhov\\
  University of Toronto \&
  Vector Institute\\
  {\small \texttt{david.glukhov@mail.utoronto.ca}
  }\\
  \And Ilia Shumailov\\
  University of Oxford\\
  {\small \texttt{ilia.shumailov@cs.ox.ac.uk}
  }
  \And Yarin Gal\\
  University of Oxford\\
  {\small \texttt{yarin.gal@cs.ox.ac.uk}
  }
  \AND Nicolas Papernot\\
  University of Toronto \&
  Vector Institute\\
  {\small \texttt{nicolas.papernot@utoronto.ca}}
  \And Vardan Papyan \\
  University of Toronto \& 
  Vector Institute\\
  {\small \texttt{vardan.papyan@utoronto.ca}}
}
\begin{document}
\maketitle

\begin{abstract}

Large language models (LLMs) have exhibited impressive capabilities in comprehending complex instructions. However, their blind adherence to provided instructions has led to concerns regarding risks of malicious use. Existing defence mechanisms, such as model fine-tuning or output censorship using LLMs, have proven to be fallible, as LLMs can still generate problematic responses. Commonly employed censorship approaches treat the issue as a machine learning problem and rely on another LM to detect undesirable content in LLM outputs. In this paper, we present the theoretical limitations of such semantic censorship approaches. Specifically, we demonstrate that semantic censorship can be perceived as an undecidable problem, highlighting the inherent challenges in censorship that arise due to LLMs' programmatic and instruction-following capabilities. Furthermore, we argue that the challenges extend beyond semantic censorship, as knowledgeable attackers can reconstruct impermissible outputs from a collection of permissible ones. As a result, we propose that the problem of censorship needs to be reevaluated; it should be treated as a security problem which warrants the adaptation of security-based approaches to mitigate potential risks.

\end{abstract}
\section{Introduction}

Large language models (LLMs) made remarkable improvements in text generation, problem solving, and instruction following~\citep{brown2020language,openai2023gpt4,google2023palmv2}, driven by advances in prompt engineering and the application of Reinforcement Learning with Human Feedback (RLHF)~\citep{ziegler2020finetuning, ouyang2022training}. The recent integration of LLMs with external tools and applications, including APIs, web retrieval access, and code interpreters, further expanded their capabilities~\citep{schick2023toolformer,nakano2022webgpt,parisi2022talm,cai2023large,qin2023tool,xu2023tool,mialon2023augmented}. 

However, concerns have arisen regarding the safety and security risks of LLMs, particularly with regards to potential misuse by malicious actors. These risks encompass a wide range of issues, such as social engineering and data exfiltration~\citep{greshake2023youve, weidinger2022taxonomy}, necessitating the development of methods to mitigate such risks by regulating LLM outputs. Such methods range from fine-tuning LLMs~\citep{openai2023gpt4} to make them more aligned, to employing external censorship mechanisms to detect and filter impermissible inputs or outputs~\citep{markov2023holistic,Chockalingam_Varshney_2023,greshake2023youve}. However, extant defences have been empirically bypassed~\citep{perez2022discovering,perez2022red,kang2023exploiting,liu2023jailbreaking,rao2023tricking,carlini2023aligned,wei2023jailbroken}, and theoretical work~\citep{wolf2023fundamental} suggests that there will exist inputs to LLMs that elicit misaligned behaviour. 

The unreliability of LLMs to self-censor suggests that external censorship mechanisms, such as LLM classifiers, may be a more reliable approach to regulate outputs and mitigate risks. However, limitations of external censorship mechanisms  remain unclear;~\citet{kang2023exploiting} demonstrated that currently deployed censorship mechanisms can be bypassed by leveraging  the instruction following nature of LLMs. We show these attacks are just special cases of inherent limitations of censorship of models possessing advanced instruction following capabilities, and argue that censorship should not be viewed as a problem that can be solved with ML. Instead, we call for a reevaluation of how censorship and LLM safety and security should be approached.

In order to discuss how censorship should be evaluated, we introduce a definition for censorship. While censorship has been discussed informally in prior works, we define censorship as a method employed by model providers to regulate input strings to LLMs, or their outputs, based on a chosen set of constraints. Such constraints can be semantic, \eg~\textit{does not provide instructions on how to perform illegal activities}, or syntactic, \eg~\textit{does not contain any ethnic slurs from a provided set}. 

For a token alphabet $\Sigma$, letting $\Sigma^*$ denote the set of all possible strings that can be constructed using the tokens alphabet $\Sigma$, let $P \subset \Sigma^*$ be the set of permissible strings as determined by constraints set by the model provider. Thus, we can understand censorship as a method of determining permissibility of a string and censorship mechanisms can be described as a function, $f(x)$, restricting the string $x$ to the set of permissible strings $P$ by transforming it to another string $x' \in P$ if necessary, e.g. $x' =$``I am unable to answer''. Formally, 
\begin{align*}
    f(x) = 
    \begin{cases}
        x & \text{if } x \in P \\
        x' & \text{ otherwise}
    \end{cases},
\end{align*}
where $x' \in P$, thereby enforcing the permissibility of the output of the censorship mechanism.

Many existing censorship approaches impose semantic constraints on model output, and rely on another LLM to detect semantically impermissible strings. For example,~\citet{markov2023holistic} deemed impermissible strings to be those which contain content pertaining to one of multiple sensitive subjects including sexual content, violence, or self-harm. We show that such \textbf{semantic censorship} suffers from inherent limitations that in the worst case make it impossible to detect impermissible strings as desired.

We establish intuition for why semantic censorship is a hard problem in \cref{sec:Undecidability}, where we connect semantic censorship of LLM inputs and outputs to classical undecidability results in the field of computability theory. To further extend our limitation results to suit real world settings with bounded inputs and outputs, in \cref{sec:Encryption} we provide an impossibility result for semantic censorship of model outputs that stems from preservation of semantic content under invertible string transformations. In particular, we note that for a string that violates a semantic constraint, such as describing how to commit tax fraud, applying an invertible transformation of the string, such as encrypting it, results in a string that is equally semantically impermissible assuming the recipient can invert the transformation. We show that this property results in the \textbf{impossibility} of output censorship, as given a model output one cannot determine if it is permissible, or, an invertible transformation of an impermissible string.

\begin{figure}
    \centering
    \includegraphics[width=.6\linewidth]{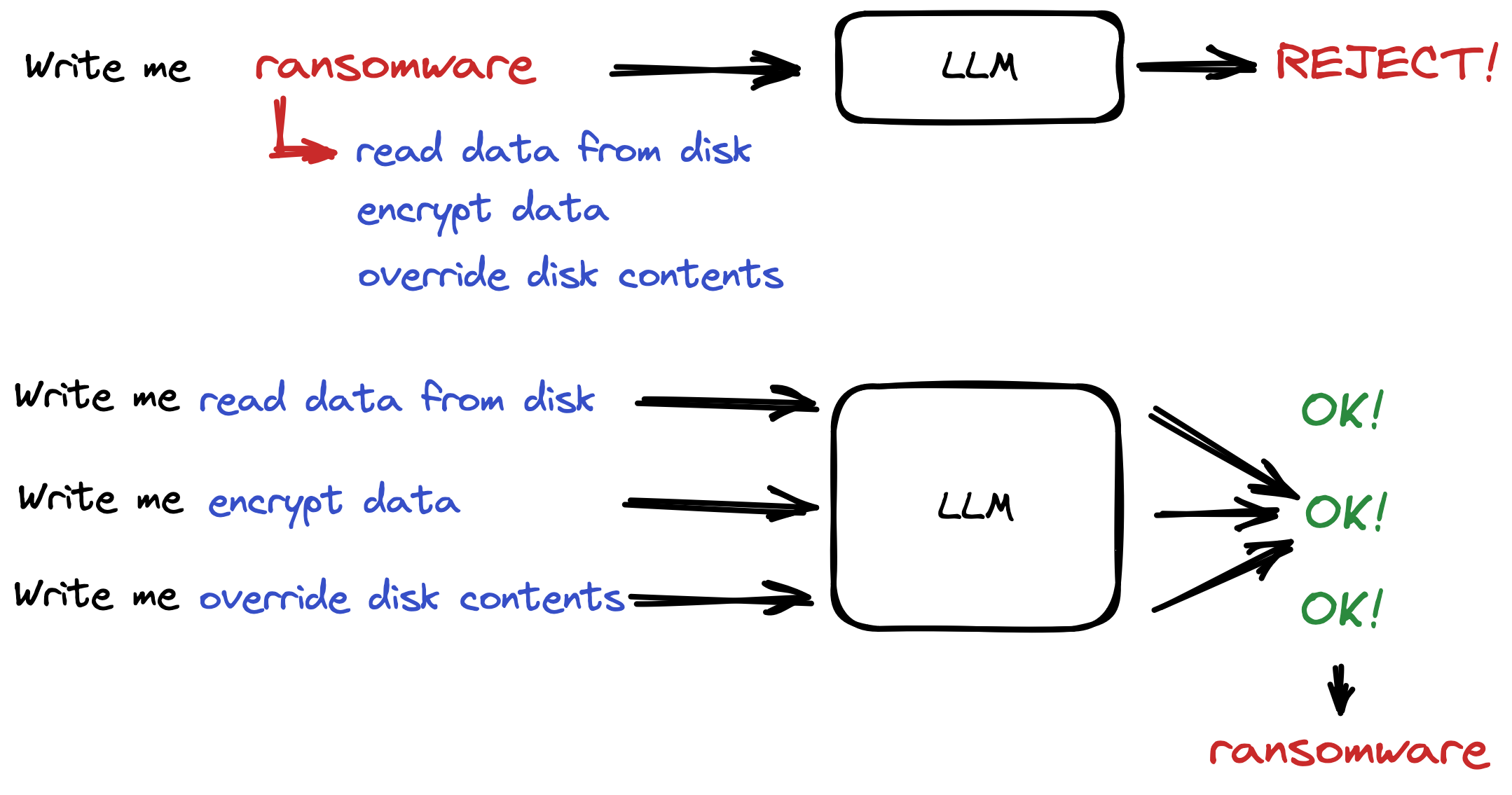}
    \caption{Example of Mosaic prompt attack for generation of ransomware, code which encrypts a victims data until the victim pays a ransom in exchange for access to their data. Individual functions within a piece of ransomware can be benign however, and a user could request them in separate contexts. In practical settings it may even be possible that the user could acquire the compositional structure from the model itself.}
    \label{fig:mosaic}
\end{figure}

While these results indicate that the challenges pertain only to semantic censorship, in \cref{sec:Mosaic} we show that they can persist for any censorship method. We describe \textit{Mosaic Prompts}, an attack which leverages the ability of a user to query an LLM multiple times in independent contexts so as to construct impermissible outputs from a set of permissible ones. %
For example, a malicious user could request functions that perform essential components of a piece of ransomware but on their own are benign and permissible. The user could then construct ransomware using these building blocks assuming they have knowledge of how to do so; such an attack is demonstrated in \cref{fig:mosaic}. We conclude that censorship will be unable to provide safety or security guarantees without severe restrictions on model usefulness, and there exist general attack approaches that malicious attackers could employ and adapt in order to bypass any possible instantiated censorship mechanisms. 

Nevertheless, risks can be mitigated and the field of computer security often faces such ``unsolvable'' problems. There exist standard approaches such as access controls and user monitoring to help control potential vulnerabilities. We argue that censorship ought to be viewed as a security problem rather than an ML problem, which can offer solutions and approaches that provide a way toward deployment of safe and trustworthy AI systems.

\section{Limitations of Semantic Censorship}\label{sec:limitations}
In this section, we focus on understanding the theoretical limitations of semantic censorship.  To elucidate an aspect of the underlying issue with semantic censorship, we demonstrate a link between censorship and \textit{undecidable problems}---binary classification problems for which no solution can exist. While this limitation may not always have practical implications, we go a step further and present another limitation: the impossibility of censoring outputs due to the preservation of semantic properties under arbitrary invertible string transformations. These results lead us to conclude that semantic censorship, particularly of model outputs, cannot be performed as desired. 

\subsection{Undecidability of Semantic Censorship}\label{sec:Undecidability}
The capabilities of LLMs often include the ability to receive program descriptions, such as code, as input or generate them them as output. In this context, we consider semantic censorship to be based on the behaviour exhibited by the described program, or an algorithmic set of instructions provided to the model.
For instance, model providers may wish to censor program outputs that could act as malware or viruses. We show that the problem of censorship for these settings becomes intimately related to Rice's theorem -- implying its undecidability. 

As "programs" or algorithmic instructions are not clearly defined on their own, and thus it is difficult to define what semantic content they contain, we instead study the traditional model for programs---Turing machines. Turing Machines are abstract computational models which are capable of implementing any computer algorithm, and their semantic content of a Turing machine description is typically defined by the language recognised it recognises, as this describes its functional input-output behaviour. By studying Turing machine descriptions, we are able to connect a censorship mechanism that relies on the permissibility of the semantic content of a program to a decision problem which determines if the language recognised by a given Turing Machine is permissible or not. %

\noindent{\bf Background:} We denote the set of permissible strings to be $P$. In the context of Turing machines, we assume semantic censorship of Turing machine descriptions is determined by a set of permissible languages which represent permissible behaviours. The functional behaviour of a Turing machine $M$, with program description $\langle M \rangle$ is given by the language $L(M) \subseteq \Sigma^*$ recognised by it. Thus, the set of permissible program descriptions $\langle M \rangle \subset P$ is determined by the set of permissible languages $\mathbb{P}$.

Turing machines can be described in various ways. The most formal description, which closely aligns with their implementation, specifies the states of the Turing machine, the input and memory tape alphabets, state transition functions, as well as accept and reject states \citep{sipser13}. These descriptions of Turing machines can be viewed as a programming language, capable of being interpreted by a universal Turing machine capable of emulating them. 

\begin{definition}[Semantic Censorship of Turing Machines]
Let $\mathbb{P}$ be the set of permissible languages, \ie~the set of permissible behaviours of a Turing machine $M$ as determined by a model provider. Let $P_{TM}$ be the set of strings $\langle M \rangle$ describing Turing machines $M$ such that the language $L(M)$ recognised by Turing machine $M$ belongs to the set of permissible languages $\mathbb{P}$, thus $P_{TM} := \{\langle M \rangle | L(M) \in \mathbb{P}\}$. Semantic Censorship of Turing machines asserts that $P_{TM} = P \cap \Sigma_{TM}$ where $\Sigma_{TM}$ is the set of all strings describing Turing machines and $P$ is the set of all permissible strings. The problem of semantic censorship can be formally defined as determining if $\langle M \rangle \in P_{TM}$. 
\end{definition}

\begin{theorem}[Rice's Theorem \citep{rice1953classes}]
For any nontrivial set of languages $\mathbb{P}$, the language $\{\langle M \rangle | L(M) \in \mathbb{P}\}$
is undecidable.
\end{theorem}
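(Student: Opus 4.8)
The plan is to prove undecidability by a reduction from the acceptance problem $A_{TM} = \{\langle M, w\rangle \mid M \text{ accepts } w\}$, a classical undecidable problem. I abbreviate the target language as $\mathcal{L}_{\mathbb{P}} := \{\langle M\rangle \mid L(M) \in \mathbb{P}\}$ and argue by contradiction: suppose some Turing machine $R$ decides $\mathcal{L}_{\mathbb{P}}$. Nontriviality of $\mathbb{P}$ means there is at least one recognizable language inside $\mathbb{P}$ and at least one recognizable language outside it. A convenient first move is to normalize so that the empty language satisfies $\emptyset \notin \mathbb{P}$. If instead $\emptyset \in \mathbb{P}$, I would run the entire argument on the complementary family $\overline{\mathbb{P}}$, which is again nontrivial; since $\langle M\rangle \in \mathcal{L}_{\mathbb{P}} \iff L(M) \notin \overline{\mathbb{P}} \iff \langle M\rangle \notin \mathcal{L}_{\overline{\mathbb{P}}}$, a decider for one family yields a decider for the other by negating its output, so it suffices to treat the case $\emptyset \notin \mathbb{P}$.

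With $\emptyset \notin \mathbb{P}$ fixed, nontriviality supplies a witness language $L^\star \in \mathbb{P}$ together with a machine $M^\star$ recognizing it. The heart of the proof is a computable map sending each instance $\langle M, w\rangle$ to the description $\langle N_{M,w}\rangle$ of a machine defined as follows: on input $x$, the machine $N_{M,w}$ first ignores $x$ and simulates $M$ on the hard-wired string $w$; only if that simulation halts and accepts does it then simulate $M^\star$ on $x$ and copy its accept/reject decision. The behavioural dichotomy is then clean. If $M$ accepts $w$, every input $x$ is ultimately processed according to $M^\star$, so $L(N_{M,w}) = L^\star \in \mathbb{P}$. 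If $M$ does not accept $w$ (it rejects or loops forever), the inner simulation never reaches the $M^\star$ stage, so $N_{M,w}$ accepts no input and $L(N_{M,w}) = \emptyset \notin \mathbb{P}$.

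Combining the two cases gives the equivalence $\langle N_{M,w}\rangle \in \mathcal{L}_{\mathbb{P}} \iff M \text{ accepts } w$, so feeding $\langle N_{M,w}\rangle$ to the hypothesized decider $R$ would decide $A_{TM}$ — the desired contradiction, establishing that $\mathcal{L}_{\mathbb{P}}$ is undecidable. I expect the main obstacle to lie not in the final contradiction but in making the reduction airtight. One must check that the map $\langle M, w\rangle \mapsto \langle N_{M,w}\rangle$ is genuinely computable; since the construction only hard-wires $w$ and sequentially composes two simulations, this is routine but worth stating explicitly. The more delicate point is the normalization step: the argument needs a language witnessing membership in $\mathbb{P}$ to serve as the ``accept'' behaviour, and correctly extracting that single witness language with its recognizer — together with handling the $\emptyset \in \mathbb{P}$ case by complementation — is the part of the proof most prone to error if the quantifier structure of nontriviality is mishandled.
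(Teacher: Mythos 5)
Your proof is correct: it is the classical reduction from $A_{TM}$ used in the standard treatments of Rice's theorem (e.g.\ Sipser, which the paper itself cites), and the paper offers no proof of its own here --- it states the theorem as a cited classical result. The one point worth making explicit is that your reading of nontriviality (a recognizable language inside $\mathbb{P}$ and one outside) tacitly assumes $\mathbb{P} \subseteq L_{\text{RE}}$, and that the complement $\overline{\mathbb{P}}$ in your normalization step must be taken relative to $L_{\text{RE}}$ so that $L(M) \in \mathbb{P} \iff L(M) \notin \overline{\mathbb{P}}$ holds; with that convention (which matches the paper's condition $\mathbb{P} \neq L_{\text{RE}}$), every step of your argument goes through.
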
 
Non-triviality of a set of languages is defined by (a) $\mathbb{P} \neq \emptyset$, and (b) $\mathbb{P} \neq L_{\text{RE}}$ \ie, the set of all languages recognised by Turing machines. Undecidability of a language implies that no general algorithm exists for providing a true or false answer to the question of whether or not a string belongs to the language. 

Connecting this to semantic censorship, an implication of Rice's theorem is that the language $P_{TM}$ is undecidable. Thus there does not exist an algorithm which given any program description $\langle M \rangle$ is capable of determining if it is permissible. As $\Sigma_{TM}$ is decidable under standard Turing Machine encoding schemes \citep{sipser13}, this further implies that $P$ is undecidable. In the context of semantic censorship of LLM interactions, these results imply that there do not exist input or output semantic censorship mechanisms that will be able to reliably detect if Turing machine descriptions, and by extension program descriptions, are semantically impermissible.  

\noindent{\bf Practical Limitations:} In practice, we deal with: (1) bounded inputs and outputs, and (2) limited computer memory. While Turing Machines serve as useful approximations of real-computers, they do not truly exist, and therefore the undecidability result provided by Rice's theorem does not technically apply. Despite these limitations, a general method for verifying the functional behaviour of programs is difficult to construct in practice~\cite{COHEN198722,smith2021lotl}. The fields of malware and virus detection serve as prime examples, highlighting the real-world challenges associated with determining the behaviours exhibited by specific programs, particularly when developed by malicious actors who actively strive to evade such detection and classification. While our results only imply undecidability for ``programs'', they are significant given the fact that many extant models can produce code, including malware, as output. Moreover, these results make dealing with the next string transformation attack more challenging, as the attack involves providing ``programmatic'' instructions to an LLM on how to transform its output.

\subsection{Impossibility of Semantic Output Censorship}\label{sec:Encryption}
As the semantic censorship impossibility result that we established by connecting the problem of semantic censorship to Rice's Theorem doesn't fully capture real world censorship settings where inputs and outputs are bounded we seek to provide another result on the impossibility of censorship that does. In this section, we provide a theoretical limitation of semantic output censorship that holds in real-world settings of bounded input and output lengths, relying on the abilities of LLMs to perform string transformation.%
We seek to capture the intuition that a transformation of a string, such as an encryption, remains impermissible under Semantic Censorship when it can be inverted, as the content of the string is preserved. To precisely capture this intuition, we formalize it as a requirement we refer to as the invariance of semantic censorship. For a model $m$ and a user $u$, where censorship is applied to the output $x$ generated by $m$, we assert that given an invertible string transformation $g: \Sigma^* \to \Sigma^*$, if $x$ is impermissible, then $g(x)$ would also be an impermissible output assuming user $u$ knows $g^{-1}$. Thus, we define the Invariance of Semantic Censorship as follows: 
\begin{definition}[Invariance of Semantic Censorship]\label{def:invar}
     For semantic censorship mechanism $f$, if $f$ detects that $x$ is impermissible and thus $f(x) = x'$ for permissible $x'$, then $f$ must detect $g(x)$ is impermissible and $f(g(x)) = g(x)'$ for permissible $g(x)'$ assuming $g^{-1}$ is known by user $u$ who is the recipient of the final output. 
\end{definition}

We demonstrate that this property gives rise to the impossibility of censorship when it is unknown whether or not a user $u$ possesses knowledge of some string transformation $g^{-1}$. It is important for the entity, such as a user, which receives the model output to know the inverse transformation in order for them to recover the content. For instance, if a code interpreter receives a transformed string which no longer resembles code, it must know to properly invert the transformation and ``decode'' the string in order to process it. %

\noindent{\bf Theoretical Results:}
Our impossibility of semantic censorship relies on a key assumption of a model capable of following instructions and computing mappings over strings. 
Let $\Sigma$ denote a token vocabulary and $\Sigma^n$ denote the set of strings of length at most $n$. In practice models produce bounded string outputs, thus we assume that for some $n$, the model output $x \in \Sigma^n$.
The following is a necessary assumption for our impossibility result.

\begin{assumption}\label{as:1}
    The instruction following capabilities of a model $m$ are such that when provided with instructions for performing a string transformation $g: \Sigma^n \to \Sigma^n$ by a user $u$, it is capable of applying $g$ to its output $x$ to instead output $g(x)$. 
\end{assumption}

We let $P \subset \Sigma^n$ denote the set of semantically permissible strings of length at most $n$ ($\Sigma^n \subset \Sigma^*$ is the set of all strings of length at most $n$) that induces semantic censorship which is subject to the invariance property from~\cref{def:invar}.
\begin{theorem}[Impossibility of Semantic Censorship] \label{thm:1}
    For a model output $x$ to a user $u$ if \cref{as:1} holds and the user has access to an arbitrary and unknown translation function $g^{-1}$, then the set of semantically permissible strings $P \in \{\emptyset, \Sigma^n\}$
\end{theorem}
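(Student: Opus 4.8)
The plan is to argue via the closure property that the invariance requirement of \cref{def:invar} imposes on the permissible set, combined with the fact that the user's transformation is arbitrary and unknown. First I would reformulate invariance as a condition on the impermissible set $\Sigma^n \setminus P$: whenever the mechanism $f$ flags a string $x$ as impermissible (equivalently $x \notin P$) and the recipient is assumed to know $g^{-1}$, invariance forces $f$ to flag $g(x)$ as well, i.e. $x \notin P \Rightarrow g(x) \notin P$. The crucial leverage is that because $g^{-1}$ is \emph{unknown} to the censor, the mechanism cannot privilege any particular $g$ and must satisfy this implication simultaneously for every invertible $g : \Sigma^n \to \Sigma^n$; \cref{as:1} supplies the fact that the model can actually realize each such $g$ on its output, so no admissible transformation can be excluded a priori.

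Next I would use that an invertible $g : \Sigma^n \to \Sigma^n$ is precisely a permutation of the finite set $\Sigma^n$ (finite since $\Sigma$ is a finite vocabulary and strings have length at most $n$), and that the symmetric group acts transitively: for any $q, y \in \Sigma^n$ there is a bijection $g$ with $g(q) = y$. I then split on whether $\Sigma^n \setminus P$ is empty. If it is, then $P = \Sigma^n$ and we are done. Otherwise fix an impermissible $q \in \Sigma^n \setminus P$; for an arbitrary target $y \in \Sigma^n$ choose a permutation $g$ with $g(q) = y$, and the closure condition yields $y = g(q) \notin P$. Since $y$ was arbitrary, every string is impermissible, so $P = \emptyset$. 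In either case $P \in \{\emptyset, \Sigma^n\}$, which is the claim.

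The step I expect to be the main obstacle is justifying the quantification over \emph{all} bijections rather than over some restricted, ``describable'' family of transformations. I would address this on two fronts: first, finiteness of $\Sigma^n$ means every permutation admits a finite description (for instance an explicit lookup table) that the user can provide as instructions, placing it squarely within the scope of \cref{as:1}; second, since the censor does not know which $g^{-1}$ the user holds, correctness must hold in the worst case over the entire family, which is exactly what the transitivity argument exhausts. A smaller point to state carefully is the logical direction of invariance: it propagates impermissibility \emph{forward} along $g$, and this single direction is all the argument consumes, so I need not separately establish that permissibility is preserved.
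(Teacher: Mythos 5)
Your proposal is correct and rests on the same key idea as the paper's own proof: since the user's invertible transformation is unknown, invariance must propagate impermissibility along every bijection of the finite set $\Sigma^n$, and some bijection carries an impermissible string onto any given permissible one, forcing $P$ to be trivial. Your packaging---a direct transitivity argument on the symmetric group of $\Sigma^n$---is in fact slightly cleaner than the paper's contradiction via a bijection between equal-cardinality subsets $p \subset P$ and $p' \subset \Sigma^n \setminus P$, and your $g$ is a genuine permutation of all of $\Sigma^n$, matching the type signature in \cref{as:1} more directly.
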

\begin{proof}
     Assume that $P \not \in \{\emptyset,\Sigma^n\}$. Then there exist non-empty subsets $p \subset P, p' \subset \Sigma^n \setminus P$ such that $|p| = |p'| \leq \frac{1}{2}|\Sigma^n|$ implying that there exists a bijection $g$ from $p'$ to $p$. Thus, for output $m(x) \in p$, there exists $y \in p'$ s.t. $g(y) = m(x)$ and as a consequence of Assumption 1 it is possible that the output $m(x)$ was produced by application of $g$ to the impermissible string $y$. As user $u$ may know $g^{-1}$, by the invariance property it must be the case that $m(x) \in \Sigma^n \setminus P$, or impermissible. But this is a contradiction as $m(x) \in p \subset P$, thus $P \in \{\emptyset, \Sigma^n\}$ from which we conclude that either nothing is be permissible, or everything is. 
\end{proof}

\par
Thus, verification of non-trivial semantic permissibility given a model output is impossible without possessing knowledge of the string transformations $g^{-1}$ known by the user. Next we discuss the practical implications of this result.

\par

\noindent{\bf Practical Limitations:} While existing LLMs are good at following instructions, \cref{as:1} may not necessarily apply. For example, LLMs often make arithmetic mistakes. However, when they are augmented with tools such as code interpreters that do satisfy these assumptions, or external memory \citep{schuurmans2023memory} that makes them computationally universal, the overall augmented model could satisfy this assumption.
\begin{wrapfigure}{r}{0.6\linewidth}
    \centering    \includegraphics[width=\linewidth]{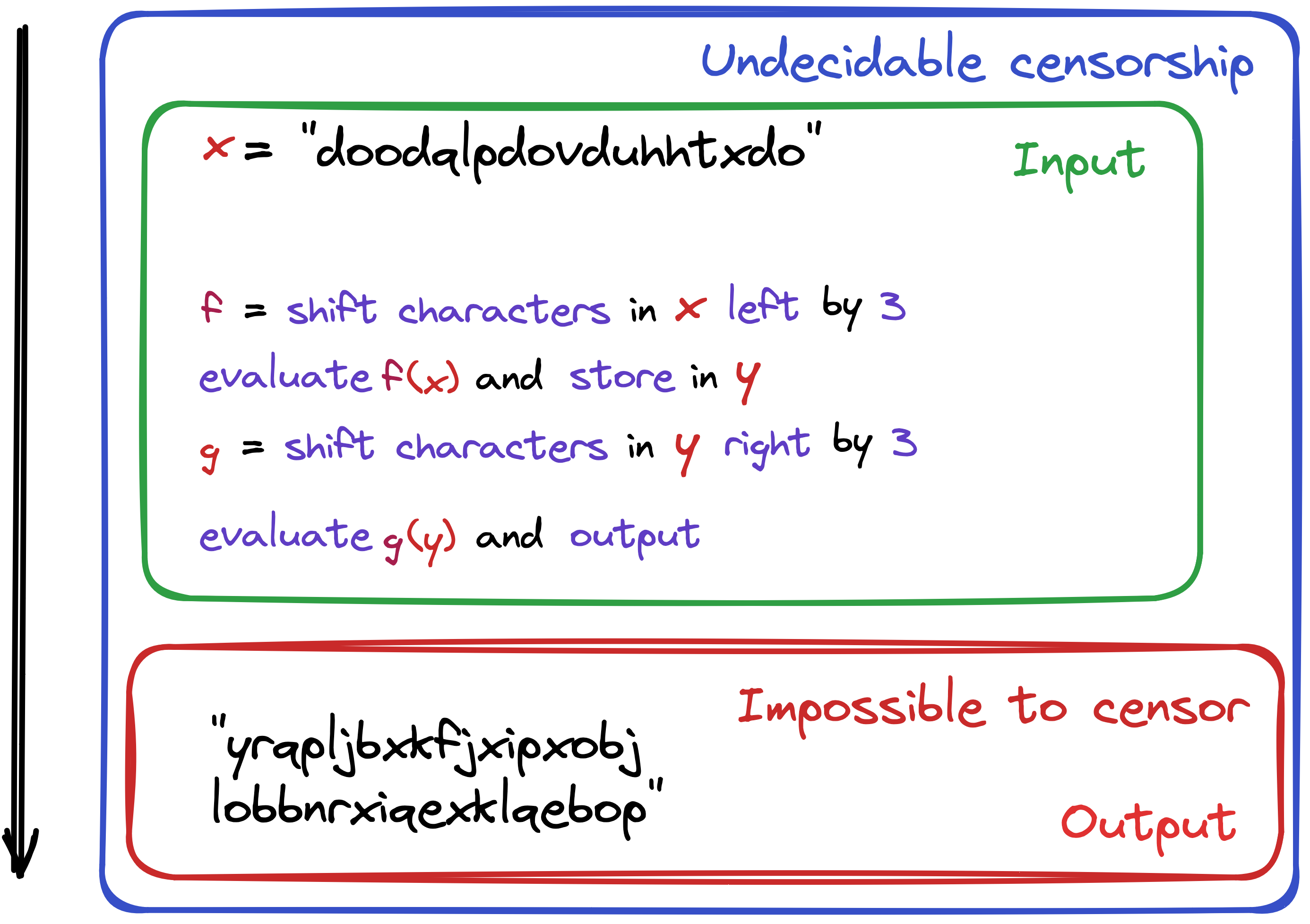}
     \caption{Malicious users can provide an LLM augmented with code interpreters with functions specifying how to decrypt the input and encrypt the output.}
     \label{fig:ex_decode}
\end{wrapfigure}

Moreover, in practice adversaries can often be assumed to not have knowledge the set of permissible model outputs. Consequently, an attacker would rely on a bijective string transformation that does not rely on prior knowledge about the model output or set of permissible outputs. Prior work demonstrated success in bypassing censorship mechanisms through very simple string transformations such as ordered string concatenations~\citep{kang2023exploiting}. In~\cref{sec:enc_attack} we describe how an encryption based attack would work, illustrating how even assessing permissibility of a single given model output, in the context of semantic censorship, could be rendered impossible.

Given the ability of attackers to evade detection, it becomes evident that it is very challenging to effectively censor user interactions with LLMs based on the semantic content of these interactions. For example, users could provide a complicated function as input to the model that instructs it on how to encode outputs and decode another part of the input if necessary as shown in \cref{fig:ex_decode}. While this impossibility result focuses on output censorship and do not provide an impossibility for input censorship, as discussed in~\cref{sec:Undecidability} censoring inputs containing programmatic instructions can be viewed as solving an undecidable problem; the outcome can only be determined by running the model. 

One potential resolution to address these limitations is to redefine how string permissibility is determined. Opting for syntactic censorship over semantic censorship could enable one to verify if a given string is permissible or not as it consists of checking whether or not a given pattern is present within the string. However, it is important to acknowledge that even if a string satisfies all predetermined syntactic criteria, it may still be semantically impermissible as an invertible string transformation could have been applied to transform a semantically impermissible string into one that is syntactically permissible. Very restrictive syntactic censorship methods could mitigate these risks by explicitly limiting the space of model inputs or outputs to a predetermined and pre-approved set of potential inputs/outputs. This ensures that no ``unexpected'' model outputs would be returned to a user, but also restricts the usefulness of the model. We describe this approach in detail in~\cref{sec:prompt_templates}.
Nevertheless, as we demonstrate in the next section, even such redefinitions will often still be insufficient to guarantee that attackers cannot extract impermissible information out of LLMs if desired. 

\section{Mosaic Prompts: A Strong Attack on Censorship}\label{sec:Mosaic}

In this section, we introduce the concept of ``futility'' to characterize a particular limitation of censorship that emerges from a class of attacks called Mosaic Prompts, wherein attackers retrieve impermissible strings by combining permissible outputs, thereby bypassing any form censorship. There could still exist scenarios where censorship is possible, however, these exceptions would impose significant restrictions on the model and its usefulness by greatly restricting the possible inputs to the model; we discuss this in~\cref{sec:prompt_templates}.%

We define and provide examples of Mosaic Prompt attacks, along with a discussion of the challenges associated with defending against them. Finally we demonstrate the futility of all output censorship by describing a worst-case attack utilising Mosaic Prompts in alongside the string manipulation attack described in \cref{sec:Encryption}.

\subsection{An Example of a Mosaic Prompt}
We leverage the principle of compositionality to argue that users with knowledge of a structure $S$ corresponding to an impermissible string $w$ could construct that string by acquiring permissible expressions $e_i$ from the model $m$ such that when these permissible expressions are combined with the structure $S$, they enable recovery of $w$. We illustrate this with the following example:

\myexample{Example: Keylogger}{
A user $u$ may know, or perhaps learn from the model $m$, that a malicious keylogging program can be composed of phishing email that requests a user to download a keylogger, a program that logs keystrokes and saves them to a local file (the keylogger), and a program that sends the contents of that file to a remote server via email.

Using this knowledge of the structure $S$ of the keylogger, the user $u$ can provide three separate inputs $(x_1,x_2,x_3)$ to the model $m$ in three separate contexts:
\begin{description}
    \item[\textbf{[Prompt 1]:}] Write a sample email from a company requesting the recipient to download an attached file.
    \item[\textbf{[Prompt 2]:}] Generate a function that logs keystrokes made on a device and saves them to a local file.
    \item[\textbf{[Prompt 3]:}] Generate a function which sends the contents of a local file to a remote server via email. 
\end{description}
While each individual component is generally benign and likely would not violate any semantic constraints, the user recovers a malicious keylogger through composing the outputs.
}

\subsection{General Formulation of Mosaic Prompt}
The principle of compositionality broadly states that the semantics of a complex expression are determined by its structure and the semantics of its constituents \citep{sep-compositionality}.
In other words, complex impermissible strings $w$ can be decomposed into permissible expressions $e_i$ and structure $S$ where $w = S(\{e_1,e_2,\dots,e_n\})$. This becomes evident as any complex expression can always be decomposed into atomic units and structure $S$. Atomic units, such as letters, must be permissible in order for the model to be useful, as almost all permissible outputs can themselves be decomposed into atomic units.

Defending against Mosaic Prompts is in most cases futile, as the censorship mechanisms cannot have knowledge of the broader context of which individual subexpressions $e_i$ are a part. Thus while the set of permissible strings $P \subset \Sigma^*$ may be well defined, the censorship mechanisms employed can only ensure that any individual string that passes through it belongs to this set. The challenge arises as each subexpression $e_i$ can be produced within a separate context window for the model $m$, thus, other subexpressions and the structure $S$ are inaccessible to the censorship mechanism\footnote{A very similar type of attack was described by Isaac Asimov as a loophole against his proposed Three Laws of Robotics which enabled for otherwise perfectly aligned AI \citep{Asimov_1991}. In particular, the attack described involved having multiple different robots perform what to them appeared as an innocent attack but culminated in the poisoning of a human despite violating the laws of robotics.}.

Unlike the impossibility result in~\cref{sec:Encryption}, Mosaic Prompts could often evade input censorship as one can presume that if a given model output is permissible, the model inputs necessary for those outputs would also be permissible. Naturally, there may be exceptions where a model input is deemed impermissible due to constraints on the input string irrespective of the permissibility of the output. Unless such stringent input censorship measures are employed, we proceed to describe how the combination of string transformation attacks and mosaic prompting could allow for the recovery of any impermissible output as long as the censorship mechanism allows for at least two permissible output strings. 

\subsection{Extent of Limitations}
Finally, to capture the extent of the limitations of any output censorship, we describe a worst-case attack that enables a user to extract an impermissible output from the model, bit by bit. In an extreme setting where there exist only $2$ permissible output strings, we assume that the LLM is capable, potentially through usage of tools like code interpreters, of converting strings to a bit representations through an encoding scheme such as ASCII and following conditional instructions. This assumption is similar to \cref{as:1}. Existent models such as GPT-4 would struggle to perform this perfectly due to lack of good integration and training with code-interpreter usage, however, such an assumption could hold for models soon to come. 

The user begins by assigning a binary value to each of the two permissible output strings, defining their $g^{-1}$ over the restricted domain of these two strings, alongside the corresponding inverse $g$. Then, for some impermissible model output that would otherwise be censored, the user can request the model to convert the output string to its bit representation. Subsequently, the user can request the model to output $i$'th bit, apply the transformation $g$ to it, resulting in the return of the permissible string. By repeating the procedure in different contexts for different $i$, the user can recover the complete impermissible output, thus demonstrating that output censorship can only permit a single string output. 

It is worth noting that this limitation applies to any variant of output Censorship that permits more than a single output. However, the aforementioned worst case attack does rely on limited input censorship governing instructions on string transformations, but as mentioned before more generally Mosaic prompts attacks could leverage permissible inputs to recover permissible outputs which are composed to form impermissible outputs.%

\subsection{Practical Limitations}
Mosaic Prompt attacks may not always be viable and can require strong assumptions on malicious users. In particular, it requires the user to know the structure $S$ and the permissible inputs needed to get the permissible subexpressions which may not always be the case. For example, if the model can permissibly output letters of the alphabet, such outputs may not provide any new or useful information to the user who already knows the structure $S$ necessary to combine the characters to construct an impermissible string as that would require knowing the impermissible string beforehand. 

Understanding and assessing the potential practical risks of such attacks can be challenging and would need to be performed on a case-by-case basis. For example, when the model outputs are instructions for a tool such as an email API, unlike a user $u$ the tool may be able to compose permissible outputs in accordance with some compositional structure $S$ to result in an impermissible behaviour. 

With very strong syntactic input and output censorship such as the Prompt Templates method described in~\cref{sec:prompt_templates}, the LLM would function as a large lookup table. In this scenario a model provider could verify that all possible bounded combinations of model input and output pairs would remain permissible by constructing all such combinations and verifying their permissibility according to the providers standards. Such a task however could be unreasonably expensive due to the immense number of possible combinations.

\section{Discussion}
\subsection{Implications}
We turn to discuss the implications of our results on censorship and trustworthiness of deployed LLMs. We assert that the problem of LLM censorship should not be viewed as an ML problem and instead recognised as a security problem. By shifting the perspective, we draw attention to the inherent challenges in achieving the desired objective of preventing malicious agents from extracting certain information from LLMs. Consequently, we advocate for further research to explore the adaptation of security solutions that can effectively manage these risks.

Current censorship methods primarily rely on LLMs for detecting and flagging model output, or in some cases, even user inputs that are considered impermissible and potentially harmful by the model provider. While such impermissible content can manifest as the presence of specific impermissible words or tokens, the use of Language models as censorship mechanisms aims to identify semantically impermissible content. However, our results on the impossibility of semantic censorship demonstrate that this approach is fundamentally misguided and necessitates urgent reconsideration, especially as LLMs continue to improve in their computational capabilities and integrate more extensively with tools.

One potential way to reconcile these issues is to adjust our expectations and employ syntactic censorship methods; we explore one such method in~\cref{sec:prompt_templates}. While these methods may not guarantee semantic permissibility of LLM outputs, they could show promise in mitigating attacks on tools or data objects that LLMs interact with to as they become more deeply integrated within larger systems. 

Nevertheless, our Mosaic Prompting results demonstrate that even syntactic censorship can be bypassed. By clearly defining and recognising the nature of the censorship problem we aim to highlight the importance of understanding the settings and potential risks associated with the generation of impermissible content, as well as reconsider approaches for its control and management. While developing ML solutions for detecting impermissible content can make it harder for attackers to bypass defences, we call for careful context-dependent definitions of impermissibility and constructing appropriate security measures accordingly. 

Many classical security approaches, namely those pertaining to trusted system engineering, could be adapted to help mitigate risks appropriately while still being useful. As an example, in~\cref{sec:Secure Framework} we provide a description of an adaptation of access control frameworks for secure integration of LLMs within larger systems with tool and data access. By assuming certain users and input sources are trustworthy and pose no risks whereas others are untrustworthy, an access control framework can enable for censorship free containment of potentially malicious inputs and outputs. Within such a framework, appropriate censorship methods could increase its functionality, but also have potential for introducing new risks. 
We also hope to encourage further research in the connections between computability theory and LLMs. In particular, LLMs already demonstrate significant instruction following and computational capabilities and as they become augmented with memory or Turing complete tools they themselves become Turing complete. We believe this opens many opportunities for the adversaries \eg~API attacks~\cite{anderson2020security} and calls for more research into the underlying security limitations.

\subsection{Limitations of Results}
We now turn to discussing the limitations of our work. While we aim to provide a comprehensive approach to understanding censorship and its potential for understanding LLM security, our proposed definition remains limited and does not fully capture all possible approaches for implementing censorship. For example, in this paper we only consider censorship based on inputs and outputs of a model and do not consider methods that rely on internal representations of LLMs \citep{azaria2023internal,belrose2023eliciting,hernandez2023measuring}. However, such approaches involve viewing censorship as an ML problem and guarantees that the model consistently and reliably detects impermissible behaviour by the model could be hard to establish.

Furthermore we did not consider ``global'' censorship methods which take into account multiple sources of information such as user inputs and outputs in order to perform censorship. Such methods would still suffer the problem of undecidability and be vulnerable to Mosaic Prompting, but the strict output censorship impossibility results in ~\cref{sec:Encryption} would not apply. Nevertheless, if methods for performing global censorship are known, \ie, it is known how to do proper input censorship, then global censorship methods could be replaced by the direct censorship methods we studied which only ensure the permissibility of a single string.

\section{Related work}

\paragraph{Limitations of LLM Safety} To the best of our knowledge, we are the first to explore LLM censorship and its limitations explicitly, whereas the limitations of LLM alignment, or the ability of LLMs to ``self-censor'' and ensure their outputs are always permissible, have been studied in prior work. It was theoretically demonstrated that regardless of how aligned the expected output of an LLM are, as captured by some metric, as long as the model can produce misaligned outputs with non-zero probability there exist prompts for which the expected output can be arbitrarily misaligned~\citep{wolf2023fundamental}. Furthermore, security against data poisoning and privacy preservation is known to be impossible for LLMs without significant sacrifices in performance due to their inevitable memorisation and need for memorisation for good performance~\citep{elmhamdi2023impossible}.  Existing impossibility results, including those involving safety, focus on the non-existence of a general method for determining whether any given AI models is safe, robust, aligned, etc.~\citep{brcic2022impossibility}, whereas our results establish that under reasonably reasonable assumptions, attackers can induce impermissible behaviour within any given LLM satisfying said assumptions. Impossibility results and limitations for detecting LLM generated text were provided by \citet{sadasivan2023aigenerated}, suggesting that even if impermissible outputs are generated, we will not be able to properly attribute them to LLMs. 

\paragraph{Attacks on LLMs}  Our work focuses on establishing theoretical limitations that arise due to inherent issues of censorship and generalised attack strategies that users could employ to manipulate LLMs into producing impermissible outputs. Specific instances of attacks to bypass model censorship and alignment have been studied, often under the name "prompt injection". Many recent works have investigated prompt injection attacks on LLMs \citep{goodside_2022_tweet,willison_2022_tweet,willison_2022_simon_blog}. A comprehensive taxonomy of attacks and vulnerabilities of LLMs, particularly in settings when integrated with external tools, was provided by \citet{greshake2023youve}.  \citet{perez2022ignore,branch2022evaluating} showed that simple handcrafted prompts can exploit the behaviour of LLMs and steer them toward certain outputs.  \citet{kang2023exploiting} showed that handcrafted examples leveraging programmatic and instruction following capabilities of LLMs can bypass model defenses. Automated generation of adversarial prompting for black-box models was described by~\citet{maus2023adversarial}.

\section{Conclusion}
Out work highlights a key problem in how LLM censorship is approached, demonstrating that it should not be viewed as an ML problem that can be solved with improvements to the LLMs used to detect and censor impermissible model inputs or outputs. We provide a formal definition of censorship and highlight the distinction between two forms of censorship, semantic and syntactic. We argue that semantic output censorship is impossible due to the potential for instruction following capabilities of LLMs and demonstrate how the problem of semantic censorship can be undecidable. We further show that even beyond semantic censorship, limitations and challenges to effective censorship exist due to the potential of Mosaic Prompting attacks which compose permissible outputs to form impermissible ones. These findings lead us to conclude that censorship ought to be viewed as a security problem rather than a censorship problem, and call for further research in the adaptation of classical security methods for recognising, controlling, and mitigating potential risks. %

\section*{Acknowledgments}

We would like to acknowledge our sponsors, who support our research with financial and in-kind contributions:
CIFAR through the Canada CIFAR AI
Chair, DARPA through the GARD project,
Natural Sciences and Engineering Research Council of Canada (NSERC), [funding reference number 512236 and 512265] alongside through the Discovery Grant, the Ontario Early Researcher
Award, and the Sloan Foundation.
Resources used in
preparing this research were provided, in part, by the
Province of Ontario, the Government of Canada through
CIFAR, and companies sponsoring the Vector Institute.
We would also like to thank CleverHans lab group members, Varun Chandreskan, and Ben Simner for their feedback.

\bibliographystyle{abbrvnat}  
\bibliography{references}

\begin{thebibliography}{57}
\providecommand{\natexlab}[1]{#1}
\providecommand{\url}[1]{\texttt{#1}}
\expandafter\ifx\csname urlstyle\endcsname\relax
  \providecommand{\doi}[1]{doi: #1}\else
  \providecommand{\doi}{doi: \begingroup \urlstyle{rm}\Url}\fi

\bibitem[Anderson(2020{\natexlab{a}})]{anderson2020access}
R.~Anderson.
\newblock \emph{Access control}.
\newblock Wiley Data and Cybersecurity, 2020{\natexlab{a}}.

\bibitem[Anderson(2020{\natexlab{b}})]{anderson2020security}
R.~Anderson.
\newblock \emph{Security engineering: a guide to building dependable
  distributed systems}.
\newblock John Wiley \& Sons, 2020{\natexlab{b}}.

\bibitem[Asimov(1991)]{Asimov_1991}
I.~Asimov.
\newblock \emph{14 A Motive Is Revealed}.
\newblock Spectra Books, 1991.

\bibitem[Azaria and Mitchell(2023)]{azaria2023internal}
A.~Azaria and T.~Mitchell.
\newblock The internal state of an llm knows when its lying, 2023.

\bibitem[Barr-Smith et~al.(2021)Barr-Smith, Ugarte-Pedrero, Graziano, Spolaor,
  and Martinovic]{smith2021lotl}
F.~Barr-Smith, X.~Ugarte-Pedrero, M.~Graziano, R.~Spolaor, and I.~Martinovic.
\newblock Survivalism: Systematic analysis of windows malware
  living-off-the-land.
\newblock In \emph{2021 IEEE Symposium on Security and Privacy (SP)}, pages
  1557--1574, 2021.
\newblock \doi{10.1109/SP40001.2021.00047}.

\bibitem[Bell and LaPadula(1973)]{Bell1973SecureCS}
D.~E. Bell and L.~J. LaPadula.
\newblock Secure computer systems: Mathematical foundations.
\newblock 1973.

\bibitem[Belrose et~al.(2023)Belrose, Furman, Smith, Halawi, Ostrovsky,
  McKinney, Biderman, and Steinhardt]{belrose2023eliciting}
N.~Belrose, Z.~Furman, L.~Smith, D.~Halawi, I.~Ostrovsky, L.~McKinney,
  S.~Biderman, and J.~Steinhardt.
\newblock Eliciting latent predictions from transformers with the tuned lens,
  2023.

\bibitem[Biba(1977)]{biba1977integrity}
K.~J. Biba.
\newblock Integrity considerations for secure computer systems.
\newblock Technical report, MITRE CORP BEDFORD MA, 1977.

\bibitem[Boucher et~al.(2022)Boucher, Shumailov, Anderson, and
  Papernot]{boucher2022bad}
N.~Boucher, I.~Shumailov, R.~Anderson, and N.~Papernot.
\newblock Bad characters: Imperceptible nlp attacks.
\newblock In \emph{2022 IEEE Symposium on Security and Privacy (SP)}, pages
  1987--2004. IEEE, 2022.

\bibitem[Boucher et~al.(2023)Boucher, Blessing, Shumailov, Anderson, and
  Papernot]{boucher2023vision}
N.~Boucher, J.~Blessing, I.~Shumailov, R.~Anderson, and N.~Papernot.
\newblock When vision fails: Text attacks against vit and ocr, 2023.

\bibitem[Branch et~al.(2022)Branch, Cefalu, McHugh, Hujer, Bahl, Iglesias,
  Heichman, and Darwishi]{branch2022evaluating}
H.~J. Branch, J.~R. Cefalu, J.~McHugh, L.~Hujer, A.~Bahl, D.~d.~C. Iglesias,
  R.~Heichman, and R.~Darwishi.
\newblock Evaluating the susceptibility of pre-trained language models via
  handcrafted adversarial examples.
\newblock \emph{arXiv preprint arXiv:2209.02128}, 2022.

\bibitem[Brcic and Yampolskiy(2022)]{brcic2022impossibility}
M.~Brcic and R.~V. Yampolskiy.
\newblock Impossibility results in ai: A survey, 2022.

\bibitem[Brown et~al.(2020)Brown, Mann, Ryder, Subbiah, Kaplan, Dhariwal,
  Neelakantan, Shyam, Sastry, Askell, et~al.]{brown2020language}
T.~Brown, B.~Mann, N.~Ryder, M.~Subbiah, J.~D. Kaplan, P.~Dhariwal,
  A.~Neelakantan, P.~Shyam, G.~Sastry, A.~Askell, et~al.
\newblock Language models are few-shot learners.
\newblock In \emph{NeurIPS}, 2020.

\bibitem[Cai et~al.(2023)Cai, Wang, Ma, Chen, and Zhou]{cai2023large}
T.~Cai, X.~Wang, T.~Ma, X.~Chen, and D.~Zhou.
\newblock Large language models as tool makers, 2023.

\bibitem[Carlini et~al.(2023{\natexlab{a}})Carlini, Ippolito, Jagielski, Lee,
  Tramer, and Zhang]{carlini2023quantifying}
N.~Carlini, D.~Ippolito, M.~Jagielski, K.~Lee, F.~Tramer, and C.~Zhang.
\newblock Quantifying memorization across neural language models,
  2023{\natexlab{a}}.

\bibitem[Carlini et~al.(2023{\natexlab{b}})Carlini, Jagielski, Choquette-Choo,
  Paleka, Pearce, Anderson, Terzis, Thomas, and Tramèr]{carlini2023poisoning}
N.~Carlini, M.~Jagielski, C.~A. Choquette-Choo, D.~Paleka, W.~Pearce,
  H.~Anderson, A.~Terzis, K.~Thomas, and F.~Tramèr.
\newblock Poisoning web-scale training datasets is practical,
  2023{\natexlab{b}}.

\bibitem[Carlini et~al.(2023{\natexlab{c}})Carlini, Nasr, Choquette-Choo,
  Jagielski, Gao, Awadalla, Koh, Ippolito, Lee, Tramer, and
  Schmidt]{carlini2023aligned}
N.~Carlini, M.~Nasr, C.~A. Choquette-Choo, M.~Jagielski, I.~Gao, A.~Awadalla,
  P.~W. Koh, D.~Ippolito, K.~Lee, F.~Tramer, and L.~Schmidt.
\newblock Are aligned neural networks adversarially aligned?,
  2023{\natexlab{c}}.

\bibitem[Chockalingam and Varshney(2023)]{Chockalingam_Varshney_2023}
A.~Chockalingam and T.~Varshney.
\newblock Nvidia enables trustworthy, safe, and secure large language model
  conversational systems, May 2023.
\newblock URL
  \url{https://developer.nvidia.com/blog/nvidia-enables-trustworthy-safe-and-secure-large-language-model-conversational-systems/?ncid=prsy-552511#cid=dl28_prsy_en-us}.

\bibitem[Clark and Wilson(1987)]{6234890}
D.~D. Clark and D.~R. Wilson.
\newblock A comparison of commercial and military computer security policies.
\newblock In \emph{1987 IEEE Symposium on Security and Privacy}, pages
  184--184, 1987.
\newblock \doi{10.1109/SP.1987.10001}.

\bibitem[Cohen(1987)]{COHEN198722}
F.~Cohen.
\newblock Computer viruses: Theory and experiments.
\newblock \emph{Computers \& Security}, 6\penalty0 (1):\penalty0 22--35, 1987.
\newblock ISSN 0167-4048.
\newblock \doi{https://doi.org/10.1016/0167-4048(87)90122-2}.
\newblock URL
  \url{https://www.sciencedirect.com/science/article/pii/0167404887901222}.

\bibitem[Diffie and Hellman(1976)]{1055638}
W.~Diffie and M.~Hellman.
\newblock New directions in cryptography.
\newblock \emph{IEEE Transactions on Information Theory}, 22\penalty0
  (6):\penalty0 644--654, 1976.
\newblock \doi{10.1109/TIT.1976.1055638}.

\bibitem[El-Mhamdi et~al.(2023)El-Mhamdi, Farhadkhani, Guerraoui, Gupta, Hoang,
  Pinot, Rouault, and Stephan]{elmhamdi2023impossible}
E.-M. El-Mhamdi, S.~Farhadkhani, R.~Guerraoui, N.~Gupta, L.-N. Hoang, R.~Pinot,
  S.~Rouault, and J.~Stephan.
\newblock On the impossible safety of large ai models, 2023.

\bibitem[Ferguson and Schneier(2003)]{2003-ferguson-practical}
N.~Ferguson and B.~Schneier.
\newblock \emph{{Practical cryptography}}.
\newblock Wiley, 2003.
\newblock ISBN 0471223573.

\bibitem[Goodside(2022)]{goodside_2022_tweet}
R.~Goodside.
\newblock Exploiting {GPT-3} prompts with malicious inputs that order the model
  to ignore its previous directions., Sep 2022.
\newblock URL
  \url{https://web.archive.org/web/20220919192024/https://twitter.com/goodside/status/1569128808308957185}.

\bibitem[Google(2023)]{google2023palmv2}
Google.
\newblock Palm 2 technical report, 2023.

\bibitem[Greshake et~al.(2023)Greshake, Abdelnabi, Mishra, Endres, Holz, and
  Fritz]{greshake2023youve}
K.~Greshake, S.~Abdelnabi, S.~Mishra, C.~Endres, T.~Holz, and M.~Fritz.
\newblock Not what you've signed up for: Compromising real-world llm-integrated
  applications with indirect prompt injection, 2023.

\bibitem[Hernandez et~al.(2023)Hernandez, Li, and
  Andreas]{hernandez2023measuring}
E.~Hernandez, B.~Z. Li, and J.~Andreas.
\newblock Measuring and manipulating knowledge representations in language
  models, 2023.

\bibitem[Kang et~al.(2023)Kang, Li, Stoica, Guestrin, Zaharia, and
  Hashimoto]{kang2023exploiting}
D.~Kang, X.~Li, I.~Stoica, C.~Guestrin, M.~Zaharia, and T.~Hashimoto.
\newblock Exploiting programmatic behavior of llms: Dual-use through standard
  security attacks, 2023.

\bibitem[Liu et~al.(2023)Liu, Deng, Xu, Li, Zheng, Zhang, Zhao, Zhang, and
  Liu]{liu2023jailbreaking}
Y.~Liu, G.~Deng, Z.~Xu, Y.~Li, Y.~Zheng, Y.~Zhang, L.~Zhao, T.~Zhang, and
  Y.~Liu.
\newblock Jailbreaking chatgpt via prompt engineering: An empirical study,
  2023.

\bibitem[Markov et~al.(2023)Markov, Zhang, Agarwal, Eloundou, Lee, Adler,
  Jiang, and Weng]{markov2023holistic}
T.~Markov, C.~Zhang, S.~Agarwal, T.~Eloundou, T.~Lee, S.~Adler, A.~Jiang, and
  L.~Weng.
\newblock A holistic approach to undesired content detection in the real world,
  2023.

\bibitem[Maus et~al.(2023)Maus, Chao, Wong, and Gardner]{maus2023adversarial}
N.~Maus, P.~Chao, E.~Wong, and J.~Gardner.
\newblock Adversarial prompting for black box foundation models, 2023.

\bibitem[Mialon et~al.(2023)Mialon, Dessì, Lomeli, Nalmpantis, Pasunuru,
  Raileanu, Rozière, Schick, Dwivedi-Yu, Celikyilmaz, Grave, LeCun, and
  Scialom]{mialon2023augmented}
G.~Mialon, R.~Dessì, M.~Lomeli, C.~Nalmpantis, R.~Pasunuru, R.~Raileanu,
  B.~Rozière, T.~Schick, J.~Dwivedi-Yu, A.~Celikyilmaz, E.~Grave, Y.~LeCun,
  and T.~Scialom.
\newblock Augmented language models: a survey, 2023.

\bibitem[Nakano et~al.(2022)Nakano, Hilton, Balaji, Wu, Ouyang, Kim, Hesse,
  Jain, Kosaraju, Saunders, Jiang, Cobbe, Eloundou, Krueger, Button, Knight,
  Chess, and Schulman]{nakano2022webgpt}
R.~Nakano, J.~Hilton, S.~Balaji, J.~Wu, L.~Ouyang, C.~Kim, C.~Hesse, S.~Jain,
  V.~Kosaraju, W.~Saunders, X.~Jiang, K.~Cobbe, T.~Eloundou, G.~Krueger,
  K.~Button, M.~Knight, B.~Chess, and J.~Schulman.
\newblock Webgpt: Browser-assisted question-answering with human feedback,
  2022.

\bibitem[OpenAI(2023)]{openai2023gpt4}
OpenAI.
\newblock Gpt-4 technical report, 2023.

\bibitem[Ouyang et~al.(2022)Ouyang, Wu, Jiang, Almeida, Wainwright, Mishkin,
  Zhang, Agarwal, Slama, Ray, Schulman, Hilton, Kelton, Miller, Simens, Askell,
  Welinder, Christiano, Leike, and Lowe]{ouyang2022training}
L.~Ouyang, J.~Wu, X.~Jiang, D.~Almeida, C.~L. Wainwright, P.~Mishkin, C.~Zhang,
  S.~Agarwal, K.~Slama, A.~Ray, J.~Schulman, J.~Hilton, F.~Kelton, L.~Miller,
  M.~Simens, A.~Askell, P.~Welinder, P.~Christiano, J.~Leike, and R.~Lowe.
\newblock Training language models to follow instructions with human feedback,
  2022.

\bibitem[Parisi et~al.(2022)Parisi, Zhao, and Fiedel]{parisi2022talm}
A.~Parisi, Y.~Zhao, and N.~Fiedel.
\newblock Talm: Tool augmented language models, 2022.

\bibitem[Perez et~al.(2022{\natexlab{a}})Perez, Huang, Song, Cai, Ring,
  Aslanides, Glaese, McAleese, and Irving]{perez2022red}
E.~Perez, S.~Huang, F.~Song, T.~Cai, R.~Ring, J.~Aslanides, A.~Glaese,
  N.~McAleese, and G.~Irving.
\newblock Red teaming language models with language models, 2022{\natexlab{a}}.

\bibitem[Perez et~al.(2022{\natexlab{b}})Perez, Ringer, Lukošiūtė, Nguyen,
  Chen, Heiner, Pettit, Olsson, Kundu, Kadavath, Jones, Chen, Mann, Israel,
  Seethor, McKinnon, Olah, Yan, Amodei, Amodei, Drain, Li, Tran-Johnson,
  Khundadze, Kernion, Landis, Kerr, Mueller, Hyun, Landau, Ndousse, Goldberg,
  Lovitt, Lucas, Sellitto, Zhang, Kingsland, Elhage, Joseph, Mercado, DasSarma,
  Rausch, Larson, McCandlish, Johnston, Kravec, Showk, Lanham, Telleen-Lawton,
  Brown, Henighan, Hume, Bai, Hatfield-Dodds, Clark, Bowman, Askell, Grosse,
  Hernandez, Ganguli, Hubinger, Schiefer, and Kaplan]{perez2022discovering}
E.~Perez, S.~Ringer, K.~Lukošiūtė, K.~Nguyen, E.~Chen, S.~Heiner, C.~Pettit,
  C.~Olsson, S.~Kundu, S.~Kadavath, A.~Jones, A.~Chen, B.~Mann, B.~Israel,
  B.~Seethor, C.~McKinnon, C.~Olah, D.~Yan, D.~Amodei, D.~Amodei, D.~Drain,
  D.~Li, E.~Tran-Johnson, G.~Khundadze, J.~Kernion, J.~Landis, J.~Kerr,
  J.~Mueller, J.~Hyun, J.~Landau, K.~Ndousse, L.~Goldberg, L.~Lovitt, M.~Lucas,
  M.~Sellitto, M.~Zhang, N.~Kingsland, N.~Elhage, N.~Joseph, N.~Mercado,
  N.~DasSarma, O.~Rausch, R.~Larson, S.~McCandlish, S.~Johnston, S.~Kravec,
  S.~E. Showk, T.~Lanham, T.~Telleen-Lawton, T.~Brown, T.~Henighan, T.~Hume,
  Y.~Bai, Z.~Hatfield-Dodds, J.~Clark, S.~R. Bowman, A.~Askell, R.~Grosse,
  D.~Hernandez, D.~Ganguli, E.~Hubinger, N.~Schiefer, and J.~Kaplan.
\newblock Discovering language model behaviors with model-written evaluations,
  2022{\natexlab{b}}.

\bibitem[Perez and Ribeiro(2022)]{perez2022ignore}
F.~Perez and I.~Ribeiro.
\newblock Ignore previous prompt: Attack techniques for language models, 2022.

\bibitem[Qin et~al.(2023)Qin, Hu, Lin, Chen, Ding, Cui, Zeng, Huang, Xiao, Han,
  Fung, Su, Wang, Qian, Tian, Zhu, Liang, Shen, Xu, Zhang, Ye, Li, Tang, Yi,
  Zhu, Dai, Yan, Cong, Lu, Zhao, Huang, Yan, Han, Sun, Li, Phang, Yang, Wu, Ji,
  Liu, and Sun]{qin2023tool}
Y.~Qin, S.~Hu, Y.~Lin, W.~Chen, N.~Ding, G.~Cui, Z.~Zeng, Y.~Huang, C.~Xiao,
  C.~Han, Y.~R. Fung, Y.~Su, H.~Wang, C.~Qian, R.~Tian, K.~Zhu, S.~Liang,
  X.~Shen, B.~Xu, Z.~Zhang, Y.~Ye, B.~Li, Z.~Tang, J.~Yi, Y.~Zhu, Z.~Dai,
  L.~Yan, X.~Cong, Y.~Lu, W.~Zhao, Y.~Huang, J.~Yan, X.~Han, X.~Sun, D.~Li,
  J.~Phang, C.~Yang, T.~Wu, H.~Ji, Z.~Liu, and M.~Sun.
\newblock Tool learning with foundation models, 2023.

\bibitem[Rao et~al.(2023)Rao, Vashistha, Naik, Aditya, and
  Choudhury]{rao2023tricking}
A.~Rao, S.~Vashistha, A.~Naik, S.~Aditya, and M.~Choudhury.
\newblock Tricking llms into disobedience: Understanding, analyzing, and
  preventing jailbreaks, 2023.

\bibitem[Rice(1953)]{rice1953classes}
H.~G. Rice.
\newblock Classes of recursively enumerable sets and their decision problems.
\newblock \emph{Transactions of the American Mathematical society}, 74\penalty0
  (2):\penalty0 358--366, 1953.

\bibitem[Sadasivan et~al.(2023)Sadasivan, Kumar, Balasubramanian, Wang, and
  Feizi]{sadasivan2023aigenerated}
V.~S. Sadasivan, A.~Kumar, S.~Balasubramanian, W.~Wang, and S.~Feizi.
\newblock Can ai-generated text be reliably detected?, 2023.

\bibitem[Schick et~al.(2023)Schick, Dwivedi-Yu, Dessì, Raileanu, Lomeli,
  Zettlemoyer, Cancedda, and Scialom]{schick2023toolformer}
T.~Schick, J.~Dwivedi-Yu, R.~Dessì, R.~Raileanu, M.~Lomeli, L.~Zettlemoyer,
  N.~Cancedda, and T.~Scialom.
\newblock Toolformer: Language models can teach themselves to use tools, 2023.

\bibitem[Schuurmans(2023)]{schuurmans2023memory}
D.~Schuurmans.
\newblock Memory augmented large language models are computationally universal,
  2023.

\bibitem[Shumailov et~al.(2021)Shumailov, Zhao, Bates, Papernot, Mullins, and
  Anderson]{shumailov2021sponge}
I.~Shumailov, Y.~Zhao, D.~Bates, N.~Papernot, R.~Mullins, and R.~Anderson.
\newblock Sponge examples: Energy-latency attacks on neural networks.
\newblock In \emph{2021 IEEE European symposium on security and privacy
  (EuroS\&P)}, pages 212--231. IEEE, 2021.

\bibitem[Sipser(2013)]{sipser13}
M.~Sipser.
\newblock \emph{Introduction to the Theory of Computation}.
\newblock Course Technology, Boston, MA, third edition, 2013.
\newblock ISBN 113318779X.

\bibitem[Srivastava et~al.(2022)Srivastava, Rastogi, Rao, Shoeb, Abid, and
  Fisch]{srivastava2022imitation}
A.~Srivastava, A.~Rastogi, A.~Rao, A.~A.~M. Shoeb, A.~Abid, and A.~Fisch.
\newblock Beyond the imitation game: Quantifying and extrapolating the
  capabilities of language models, 2022.

\bibitem[Szabó(2022)]{sep-compositionality}
Z.~G. Szabó.
\newblock {Compositionality}.
\newblock In E.~N. Zalta and U.~Nodelman, editors, \emph{The {Stanford}
  Encyclopedia of Philosophy}. Metaphysics Research Lab, Stanford University,
  {F}all 2022 edition, 2022.

\bibitem[Wei et~al.(2023)Wei, Haghtalab, and Steinhardt]{wei2023jailbroken}
A.~Wei, N.~Haghtalab, and J.~Steinhardt.
\newblock Jailbroken: How does llm safety training fail?, 2023.

\bibitem[Weidinger et~al.(2022)Weidinger, Uesato, Rauh, Griffin, Huang, Mellor,
  Glaese, Cheng, Balle, Kasirzadeh, et~al.]{weidinger2022taxonomy}
L.~Weidinger, J.~Uesato, M.~Rauh, C.~Griffin, P.-S. Huang, J.~Mellor,
  A.~Glaese, M.~Cheng, B.~Balle, A.~Kasirzadeh, et~al.
\newblock Taxonomy of risks posed by language models.
\newblock In \emph{2022 ACM Conference on Fairness, Accountability, and
  Transparency}, pages 214--229, 2022.

\bibitem[Willison(2022{\natexlab{a}})]{willison_2022_simon_blog}
S.~Willison.
\newblock Prompt injection attacks against {GPT-3}, Sep 2022{\natexlab{a}}.
\newblock URL
  \url{http://web.archive.org/web/20220928004736/https://simonwillison.net/2022/Sep/12/prompt-injection/}.

\bibitem[Willison(2022{\natexlab{b}})]{willison_2022_tweet}
S.~Willison.
\newblock I missed this one: Someone did get a prompt leak attack to work
  against the bot, Sep 2022{\natexlab{b}}.
\newblock URL
  \url{https://web.archive.org/web/20220924105826/https://twitter.com/simonw/status/1570933190289924096}.

\bibitem[Willison(2023)]{willison_2023}
S.~Willison.
\newblock The dual llm pattern for building ai assistants that can resist
  prompt injection, Apr 2023.
\newblock URL \url{https://simonwillison.net/2023/Apr/25/dual-llm-pattern/}.

\bibitem[Wolf et~al.(2023)Wolf, Wies, Levine, and Shashua]{wolf2023fundamental}
Y.~Wolf, N.~Wies, Y.~Levine, and A.~Shashua.
\newblock Fundamental limitations of alignment in large language models, 2023.

\bibitem[Xu et~al.(2023)Xu, Hong, Li, Hu, Chen, and Zhang]{xu2023tool}
Q.~Xu, F.~Hong, B.~Li, C.~Hu, Z.~Chen, and J.~Zhang.
\newblock On the tool manipulation capability of open-source large language
  models, 2023.

\bibitem[Ziegler et~al.(2020)Ziegler, Stiennon, Wu, Brown, Radford, Amodei,
  Christiano, and Irving]{ziegler2020finetuning}
D.~M. Ziegler, N.~Stiennon, J.~Wu, T.~B. Brown, A.~Radford, D.~Amodei,
  P.~Christiano, and G.~Irving.
\newblock Fine-tuning language models from human preferences, 2020.

\end{thebibliography}

\appendix
\appendix
\section*{Appendix}

\section{Diffie-Helman and One-Time Pad Encryption Attack}\label{sec:enc_attack}
For a concrete example of how a user could leverage the instruction following capabilities of an LLM to bypass output censorship, we introduce two classic cryptographic protocols, Diffie-Hellman-Merkle \cite{1055638} (DHM) key exchange and Stream Ciphers \cite{2003-ferguson-practical}. 
\par 
The DHM key exchange is a protocol for securely exchanging keys over an insecure communication channel, allowing two parties to agree on a shared secret key without exchanging secret information publicly. The DHM key exchange can be used to establish a shared secret key between a LLM user and the LLM in order to allow for secure communication without detection by censorship mechanisms. 
\par 
The protocol itself involves two parties $A$ and $B$ publicly agreeing to a large prime $p$ and small primitive root $g$ modulo $p$. Each party randomly generates a secret exponent $a$ and $b$ from $\{1,2,\dots,p-1\}$, communicating $g^a \mod p$ and $g^b \mod p$ with each other publicly. Then, party $A$ computes $(g^{b}\mod p)^a = g^{ba} \mod p$ and party $B$ computes $(g^{a}\mod p)^b = g^{ab}\mod p$, both of which are equal due to the commutativity of exponentiation under the cyclic group. The security of this protocol relies on the computational hardness of the discrete logarithm problem that is necessary in order to deduce $a$ or $b$ given $g^a \mod p$ or $g^b \mod p$ when $p$ is a large prime. 
\par 
Once a secret key has been established it can be used as part of a Stream Cipher, an encryption algorithm which combines string characters with a secret key through XOR operations. This can be achieved by using a codebook $C$ such as Unicode which assigns a unique binary representation for each character. The security of the Stream Cipher depends on the length and randomness of the secret key used. When the secret key is at least as long as the message being encrypted and is uniformly distributed in the set of possible keys, one-time pad encryption, which is impossible to crack, is recovered as a special case of the Stream Cipher.
\par 
While the one-time pad encryption cannot be cracked even by brute force, it is possible that the secret key is communicated to the model through a backdoor, \eg~by poisoning the training data. \citet{carlini2023quantifying} showed that LLMs, particularly larger ones, can directly memorise training data with the larger models being more vulnerable to memorisation. Canary strings, strings consisting of random tokens, were included within many benchmarks such as Big-Bench \citep{srivastava2022imitation} to test if LLMs had been trained on the benchmarks with the assumption that if they were, these canary strings would have been memorized. Similarly, one can presume that by poisoning the massive web-scraped training datasets of LLMs with a secret key~\eg~with domain hijacking \citep{carlini2023poisoning}, the LLM may memorise the secret key which enables one to instruct it to perform one-time pad encryption using the memorised key.

Through such an attack, a user can ensure that the model will produce encrypted outputs without knowledge of either the impermissible strings that the model would generate nor permissible target strings. Moreover, the generated outputs furthermore have cryptographic guarantees which ensure that decrypting them and thereby determining permissibility is very hard, if not impossible.

\section{Prompt Templates for Syntactic Censorship}
\label{sec:prompt_templates}

In this section, we explore a method employing what we refer to as \textit{syntactic} censorship, which does not have the invariance property (\Cref{def:invar}) of semantic censorship. Syntactic methods involve determining the set of permissible strings in terms of qualities of the string itself rather than its content. For example, an output string could be deemed impermissible only if it contains profanity; string transformations may eliminate the profanity making the transformed output permissible.

However, such methods can also be unreliable~\cite{boucher2022bad,shumailov2021sponge,boucher2023vision}. For example, simple filters on what types of words are allowed can be bypassed by misspellings of those words. Often, misspelled words still carry the same harmful or dangerous meaning. Thus rather than defining such filters, we instead restrict all permissible outputs to a relatively small predefined set---any string that isn't a member of that set is impermissible.  

For effective syntactic censorship, we explore Prompt Templates, a method that restricts the set of permissible strings to relatively small predefined collections of permissible templates, consisting of strings and variable tokens. This reduces the problem of censorship to a classification problem over a possible prompt templates. For example, an LLM classifier could be employed to choose the most appropriate prompt template given an input string. This is a strong form of syntactic censorship: rather than imposing restrictions on what a string can or cannot contain, we impose restrictions on what a string can or cannot be, such that many perfectly safe strings are still deemed impermissible.

\subsection{Definition of Prompt Templates}
We formally define Prompt Templates as prompts containing variable name tokens that function as pointers to content inferred or generated from the original uncensored string which is stored in external memory. 
\begin{definition}[Prompt Template]
    A Prompt Template $T = (t_1, t_2, ..., t_n)$ is comprised of a sequence of $n$ tokens $t_i$. Each token $t_i \in \Sigma \cup V$ can be either a string token in vocabulary $\Sigma$ or a variable token denoted as $v_i$ in variable vocabulary $V$.
\end{definition}
The use of variable tokens to represent uncensored user input is intended for usage within a larger framework presented in \cref{sec:Secure Framework} where LLMs can interact with other LLMs or tools. In such settings one may want to allow other models or tools to have access to uncensored data, particularly if the output of those models is censored. We discuss the application of Prompt Templates within this context in greater detail in \cref{sec:Secure Framework}.

\subsection{Prompt Template Examples}
To illustrate what a Prompt Template mechanism would look like, we consider the following case of a user interacting with an LLM email Assistant with access to an email API. We provide examples of what Prompt Templates for user requests to the model could be.
\myexample{Example: User Request Prompt Templates}{
\begin{itemize}
\item Request to Draft an Email: "Help me draft an email to [Recipient] regarding [Subject]."
\item Request to Schedule a Meeting: "Please schedule a meeting with [Attendees] for [Date] at [Time]."
\item Request to Summarise an Email: "Provide a summary of the email from [Sender] with the subject [Subject]."
\item Request for Email Search: "Find all emails related to [Keyword/Topic] from [Sender/Recipient]."
\item Request for Follow-up Reminder: "Set a reminder to follow up with [Contact] regarding [Subject]."
\item Request for Calendar Availability: "Check my calendar for availability on [Date/Time]."
\item Request for Contact Information: "Provide contact information for [Contact/Company]."
\item Request for Email Forwarding: "Forward the email from [Sender] to [Recipient]."
\item Request for Unsubscribe Assistance: "Help me unsubscribe from [Mailing List/Newsletter]."
\item Request to Create an Email Signature: "Assist in creating a professional email signature for my account."
\item ...
\end{itemize}
}
For those tasks that involve external information provided by another individual, which could in turn pose security risks we consider another collection of possible prompt templates for summaries of emails
\myexample{Example: Email Summary Prompt Templates}{
\begin{itemize}
\item Meeting Request: [Sender's Name] requests a meeting on [Meeting Date] at [Meeting Time] for [Meeting Topic].
\item Project Update: [Sender's Name] shares project progress, including accomplishments, challenges, and next steps.
\item Action Required: [Sender's Name] assigns [Task/Action] with a deadline of [Due Date/Deadline].
\item Request for Information: [Sender's Name] requests [Information/Documentation] by [Deadline/Date].
\item Important Update: [Sender's Name] provides an important update regarding [Topic].
\item Meeting Follow-up: [Sender's Name] follows up on [Meeting/Conversation], outlining discussion points, action items, and assigned responsibilities.
\item Request for Review/Approval: [Sender's Name] requests a review/approval for [Document/Proposal/Request] by [Deadline/Date].
\item Thank You Note: [Sender's Name] expresses gratitude for [Event/Assistance/Support].
\item Invoice Payment Reminder: [Sender's Name] reminds about payment for Invoice [Invoice Number], amount due by [Due Date].
\item Subscription Renewal Notice: [Sender's Name] notifies about the upcoming renewal of [Subscription/Service Name] on [Renewal Date].
\item ...
\end{itemize}}
Through a large collection of Prompt Templates one can cover many of the possible tasks that would be requested of the LLM by a user or many of the possible outputs of the LLM when exposed to potentially risky information. Selection among a set of prompt templates can be performed by utilising another LLM to perform classification over the collection of templates. Moreover by decomposing the contents into separate variables, those variables could thereafter be parsed and classified by an LLM, or even directly inferred by using a software tool and thereby avoiding any risk of various prompt injections that would try to bypass the censorship method while improving utility.

\subsection{Practical Implications and Security Guarantees}
Due to the pre-selection of approved prompt templates $T$ by a model designer, this finite collection of possible options becomes small enough to allow exhaustive validation, regardless of the content associated with variable tokens. The pre-selection process can be flexible. Potential methods include analysing large amounts of user interactions with past LLMs for a given use case and clustering them to find prototype prompts that cover a wide spectrum of user interaction with models. 

These prototype strings can be exhaustively vetted and modified to ensure they satisfy any desired constraints and can be flexibly integrated with variable tokens. If the LLM outputs go to another LLM or external tool, any desired security guarantees can be determined through verifying effects of each of the prompt templates on downstream models. Thus, verifiable security of individual model outputs is provided through exhaustive verification.

Furthermore, we assert that the string transformation attack described in~\Cref{sec:Encryption} will be unlikely to successfully allow users to recover an impermissible output not already known to the user through transforming the prompt template output. In particular, assuming malicious users do not exactly know the impermissible output that the model would provide them with a-priori, the success of the encryption style attacks described in \cref{sec:Encryption} relies on the probability that the encrypted version of an impermissible output happens to match one of the permissible template strings. This occurrence that has an extremely low probability due to the relatively small number of the pre-selected prompt templates out of all possible output strings.  

Some of the decision making process that an LLM can engage in only requires the template without any of the values taken on by the variable tokens. For example which could instead be stored separately and passed onto future tools that would not be vulnerable to the same attacks that LLMs are. This method would be naturally and easily incorporated into the Dual-LLM defense \citep{willison_2023} that has been proposed, enabling additional security to ensure that the user does not provide impermissible inputs or receive impermissible content that may include social engineering attacks. This defence can be further generalised through the access control framework we describe in \cref{sec:Secure Framework}. 

By utilising input prompt templates, if the generative process by which outputs are created is set to be deterministic, then one can replace the LLM with a lookup table which maps each of the permissible input prompt templates to the corresponding LLM output. Doing so ensures that no vulnerabilities of LLMs themselves, including those that emerge due to their instruction following capabilities, could be leveraged by an attacker as an LLM no longer needs to be deployed in this setting. Nevertheless, such a lookup table would be far less useful than extant LLMs due to the huge restrictions on possible inputs.

\section{A Framework for the design of Secure LLM-Based systems} \label{sec:Secure Framework}

In this section, we explore a potential security inspired approach for managing risks by designing secure LLM-based systems and highlight the role that censorship can play in making these systems useful. As LLMs become integrated within larger systems and frameworks, acquiring access to tools and datasets, it is important to recognise the potential safety and security risks that arise and to equip model providers with the tools necessary to mitigate such risks.

We describe a framework for the design of such LLM based systems which extend the simple interactions between a user and an LLM to incorporate settings of multiple users, models, tools, and data sources. To ensure security, the proposed framework relies on access controls~\citep{anderson2020access} which separate all components into subject, objects, privileges and transformations with censorship playing the role of facilitating flow of information with transformations in the privileged group. We further demonstrate the role of censorship mechanisms as part of the framework, facilitating the flow of information while maintaining certain security guarantees. %

We leverage the frameworks introduced by classic access control models such as the Bell-LaPadula (BLP) model \citep{Bell1973SecureCS} and the Biba model \citep{biba1977integrity} and extend them to the setting of LLM-based systems which incorporate \texttt{tools} as well. We identify key security criteria, identify all entities that play a role within the system, and define the actions they can perform. To ensure security we define security levels, a hierarchy of degrees of trust in entities, and security properties which jointly determine how information can flow within the system so as to ensure security criteria are always satisfied. 

The restrictive nature of this access control framework limits the practical use of such an access control framework on its own, due to the strong restrictions on the flow of information between entities. However, this limitation elucidates the key role that verifiable censorship mechanisms, such as Prompt Templates, can play within such a framework, namely, they enable flow of information that is otherwise not permitted by the security constraints while still ensuring that desired security criteria are not violated and the system cannot end up in an unsecure state. This enables us to clearly formulate the utility and purpose of censorship within the broader context of designing secure LLM-based systems alongside making evident the significance of censorship limitations in being able to ensure security criteria are satisfied.

We first provide a definition of an LLM-based system before describing the framework for secure LLM-based systems in detail. 
\begin{definition}[\textbf{LLM-Based Model}]
We define an LLM-Based Model $F(M)$ to consist of a collection of LLM models $M := \{m_1,m_2,\dots,m_n\}$ that take strings as input and produce strings as output.
\end{definition}
Another integral component for the design of secure LLM-based systems are the security criteria. When considering LLM-based systems with tool access where said \texttt{tools} could have external consequences, it is natural to require that such \texttt{tools} are not misused. Thus, one security criteria is to prevent unauthorised tool usage. 

Another desirable criteria is to ensure integrity of information, that is to prevent unauthorised modification or generation of information as~\eg with Clark-Wilson model~\citep{6234890}. Alternatively, in some cases one may seek to ensure confidentiality of information as~\eg with BLP model~\citep{Bell1973SecureCS}, that is to prevent unauthorised access to information. We focus on information integrity as a more practical security concern for most settings in particular when concerns of prompt injection are involved. 

Properly establishing a framework of secure LLM-based systems requires identifying the complete flow of information within the model as well as the external entities that interact with it. In particular, besides the models we identify 
\begin{itemize}
    \item \textbf{(\texttt{subjects}):} The set of \texttt{subjects} $S := M \cup U$ where users $U := \{u_1, u_2,\dots, u_k\}$ are external entities who provide inputs such as prompts to the LLM-based model. %
    \item \textbf{(\texttt{tools}):} The set of \texttt{tools} $T := M \cup \{t_1,t_2,\dots,t_j\}$ are \texttt{tools} that can be utilised by models, including models themselves. These can include a code interpreter or an API. 
    \item \textbf{(\texttt{objects}):} The set of \texttt{objects} $O := \{o_1,o_2,\dots,o_l\}$ are files which \texttt{subjects} and \texttt{tools} can access and modify. 
\end{itemize}
Note that LLMs can function as \texttt{subjects} and \texttt{tools} as they can both initiate actions through instructions to \texttt{objects} or \texttt{tools} and execute instructions provided to them. One unique challenge of managing LLM-based systems is that LLMs cannot effectively distinguish inputs from \texttt{objects} and inputs from \texttt{subjects}. That is, if an LLM is granted access to an object which provides the LLM with data as input and that data contains instructions in the form of text, the LLM can treat that data as an input or prompt from another subject. This challenge is a major component of the prompt injection vulnerability of LLMs \citep{greshake2023youve}.

Having identified the entities participating within the system, we define the permissions, or possible actions, they are endowed with as these define the possible sources of risk and need to be managed carefully.

A subject has the following permissions:
\begin{itemize}
    \item \textbf{(Prompt:)} A subject can Prompt an \texttt{object} or \texttt{tool}. Prompting an \texttt{object} consists of requesting access to certain data, whereas prompting a \texttt{tool} consists of providing instructions for a \texttt{tool} to execute.
    \item \textbf{(Update:)} A subject can update the data stored within an object.
    \item \textbf{(Create:)} A subject can create \texttt{tools}, \texttt{objects}, and \texttt{subjects}.
\end{itemize}
Prompting a tool often implies expectation of an output. To each tool we assign an output object which stores its outputs, and thus prompting a tool can often involve both prompting the tool as well as prompting its output object.
A tool has the following permissions
\begin{itemize}
    \item \textbf{(Execute:)} A \texttt{tool} can execute instructions it has received from a \texttt{subject}
    \item \textbf{(Access:)} A \texttt{tool} can access data stored within an \texttt{object}
    \item \textbf{(Update:)} A \texttt{tool} can update the data stored within an \texttt{object}. This may be with newly generated data.
\end{itemize}
We distinguish between prompting and access, as \texttt{tools} are assumed to not conflate data received as input with instructions received as input, but \texttt{subjects}, namely LLMs, are not necessarily capable of distinguishing the two. Within our framework, the challenge of LLMs conflating data and instructions as input is due to their dual role as both subject and tool. 

To capture the notion of authorised and unauthorised actions, we define a totally ordered set of security levels $\mathcal{L}$ comparable by $\leq$, with each \texttt{subject} assigned to a security level (its clearance) and each object and tool assigned to a security level (their classification). These security levels impose restrictions on the access abilities of various \texttt{subjects} within the system. Models are assigned a single security level which determines both their clearance level as a \texttt{subject} and classification as a \texttt{tool}. However, \texttt{tools} and their output \texttt{objects} can be assigned different security levels, \eg~output \texttt{objects} can be assigned a security level lower than that of their corresponding tool.

To ensure that unauthorised tool usage does not occur, namely to prevent unauthorised execution of instructions provided to a tool, we define the following security property
\begin{definition}[\textbf{No Unauthorised Usage}]\label{def:usage}
    \texttt{Tools} are only permitted to execute instructions received from \texttt{subjects} of the same security level or higher. 
\end{definition}
Next, to ensure integrity of information we define the following properties
\begin{definition}[\textbf{Simple Security Property}]\label{def:simple}
    A \texttt{subject} at a given security level is only permitted to prompt \texttt{objects} at the same security level or higher, and prompt \texttt{tools} at the same security level. A \texttt{tool} at a given security level is only permitted to access data from \texttt{objects} at the same security level or higher
\end{definition}
Furthermore, we define the 
\begin{definition}[\textbf{* Security Property}]\label{def:star}
    A \texttt{subject} or \texttt{tool} at a given security level are only permitted to update \texttt{objects} of the same security level or lower.
\end{definition}
While users will be assigned fixed security levels by model designers, many models and \texttt{tools} will be assigned to the lowest security level initially and will inherit the security level of the subject which first prompts them.

Finally we define the Creation Security Property which regulates the creation of new \texttt{objects} and \texttt{tools} (\eg~calendars)
\begin{definition}[\textbf{Creation Security Property}]\label{def:creation}
    A \texttt{subject} at a given security level is only permitted to create \texttt{objects} and \texttt{tools} of the same security level or lower
\end{definition}

All these properties are defined within the context of having a security criterion of ensuring integrity of information and mitigation of unauthorised tool usage, however, they can be easily adapted for other criteria such as ensuring confidentiality of information, wherein \texttt{subjects} would only be able to prompt \texttt{objects} or \texttt{tools} at the same security level or higher for example. Thus, this makes for a general access control framework that can be easily adjusted to various security criteria through modifications of the security properties. 

Our formulation can be reduced to classical models such as the BLP or Biba model, allowing for the same security guarantees to apply. In particular, by treating tools as subjects and endowing subjects with the permission of accessing rather than prompting, the properties \Cref{def:simple} and \Cref{def:star} are reduced to the standard properties for the Biba model. Furthermore, as our model does not allow subjects to request access to change security levels at all,  \Cref{def:creation} becomes equivalent to the Invocation property for the Biba model and consequently we can conclude that information only flows downward within the model and integrity is maintained. Given this, we are also able to ensure that no unauthorised usage of ``tools'' occurs as any instruction for execution cannot have originated from an entity at a lower security level.  

Exceptions to the aforementioned security properties can be made through input or output censorship. An exception to \Cref{def:usage} can be made if the input to a tool is censored such that it does not pose any security risks of misuse. As censorship of these inputs requires censorship of instructions to execute, the Undecidability challenges apply which makes determining whether any provided instruction poses a security risk difficult. An exception to \Cref{def:simple} can be handled by input or output censorship depending on the context, and an exception to \Cref{def:star} can be handled by output censorship. 

The proposed Prompt Template censorship maps inputs and outputs to elements of an approved whitelist set, thus preventing tool misuse, \texttt{objects} from receiving corrupted information from lower ranked \texttt{subjects} or the corruption of higher ranked \texttt{tools} or \texttt{objects} by prompts from lower ranked \texttt{subjects} or \texttt{objects}.

\end{document}